\newtheorem{def.}{Definition}
\newtheorem{prop.}{Proposition}
\newtheorem{coro.}{Corollary}
\newtheorem{exam.}{Example}
\newtheorem{theorem}{Theorem}
\newtheorem{remark}{Remark}
\title{A Sheaf-Theoretic and Topological Perspective on Complex Network Modeling and Attention Mechanisms in Graph Neural Models}
\title{A Sheaf-Theoretic and Topological Perspective on Complex Network Modeling and Attention Mechanisms in Graph Neural Models}
\author {
    Chuan-Shen Hu
}
\begin{document}

\maketitle

\begin{abstract}
Combinatorial and topological structures, such as graphs, simplicial complexes, and cell complexes, form the foundation of geometric and topological deep learning (GDL and TDL) architectures. These models aggregate signals over such domains, integrate local features, and generate representations for diverse real-world applications. However, the distribution and diffusion behavior of GDL and TDL features during training remains an open and underexplored problem. Motivated by this gap, we introduce a cellular sheaf theoretic framework for modeling and analyzing the local consistency and harmonicity of node features and edge weights in graph-based architectures. By tracking local feature alignments and agreements through sheaf structures, the framework offers a topological perspective on feature diffusion and aggregation. Furthermore, a multiscale extension inspired by topological data analysis (TDA) is proposed to capture hierarchical feature interactions in graph models. This approach enables a joint characterization of GDL and TDL architectures based on their underlying geometric and topological structures and the learned signals defined on them, providing insights for future studies on conventional tasks such as node classification, substructure detection, and community detection.
\end{abstract}


\section{Introduction}

End-to-end geometric and topological deep learning (GDL and TDL) models, including graph neural networks (GNNs) and simplicial neural networks (SNNs), are built upon combinatorial, geometric, and topological structures such as graphs, simplicial complexes, and cell complexes~\cite{GDL_Bronstein,hajij2023topologicaldeeplearninggoing,zia2024topological}. These frameworks have become powerful and versatile tools for addressing a wide range of real-world applications. Notably, graph structure-based deep learning models, such as the graph convolutional network (GCN) and graph attention network (GAT) models~\cite{KipfGCN2017,velickovic2018graph}, have demonstrated remarkable effectiveness in analyzing social, biological, and scientific data with complex geometric structures, such as social networks~\cite{LI2023126441}, molecular graphs~\cite{NIPS2015_f9be311e,Kearnes2016,chen2019graph}, and citation networks~\cite{Gong_2019_CVPR,Xie_GNN_citation_2021,Jiang_2019_CVPR}. 

Furthermore, simplicial neural networks (SNNs), or more generally, cell complex neural networks (CXNs), extend GNN architectures by incorporating higher-dimensional objects such as $1$-, $2$-, or $3$-simplices (i.e., edges, triangles, and tetrahedra) as higher-order nodes~\cite{ebli2020simplicialneuralnetworks,hajij2021cellcomplexneuralnetworks}. These models perform feature diffusion and aggregation based on face relations across simplices of different dimensions, enabling the capture of richer geometric and topological interactions within complex structures~\cite{pmlr-v139-bodnar21a}.
 
From a mathematical perspective, GNN, SNN, and CXN models primarily implement neighborhood-based feature diffusion frameworks, where neuron interactions are determined primarily by the topological structure of the underlying domain, such as (co)facial, upper, and lower adjacencies among simplices or cells~\cite{ebli2020simplicialneuralnetworks,hajij2021cellcomplexneuralnetworks}. However, the relationship between the topology of local regions and the embedding features (or signals) defined on them requires a more refined investigation. In particular, GDL and TDL models that explicitly account for the local alignment and harmonicity of node features along edges---especially in combinatorial or graph-structured data with geometric or physical relevance~\cite{short2022current,cooperband2023towards,cooperband2023cosheaftheoryreciprocalfigures,cooperband2024equivariantcosheavesfinitegroup,cooperband2025unifiedorigamikinematicscosheaf}---remain largely unexplored. 

Sheaf theory~\cite{Tennison_1975,BredonSheaf1997}, a cornerstone of modern mathematics, provides a unified framework for jointly modeling both a geometric object and the signals defined on its local regions, known as \textit{local sections}. It analyzes how these signals behave on overlaps of local regions and relates them to the global structure and its corresponding \textit{global sections}. In recent years, \textit{cellular sheaves}~\cite{Michael_Robinson_book_2014,curry2014sheaves,Curry2015,hansen2019toward,arya2025sheaf}, a discretized formulation of sheaves defined on combinatorial structures such as graphs, simplicial complexes, and cell complexes, have gained increasing attention in GDL and TDL architecture design~\cite{hansen_shNN_2020,barbero2022sheaf,bodnar2022neural,barbero2022sheafLap}. Cellular sheaves offer an elegant mathematical foundation for representing feature dependencies, enforcing local consistency, and modeling information flow across structured domains. 

In particular, by integrating the sheaf Laplacian into the diffusion process, these methods have been instrumental in advancing sheaf neural networks (ShNNs), sheaf attention networks (SheafANs), and other topology-aware learning models, which have shown promising results in applications such as social network analysis and citation network modeling~\cite{barbero2022sheaf,bodnar2022neural,barbero2022sheafLap}. Notably, SheafAN employs a sheaf–attention mechanism that leverages directional weights between nodes in the underlying graph, using learnable cellular sheaves parameterized by MLPs and incorporating the corresponding sheaf Laplacian into the diffusion algorithms~\cite{barbero2022sheaf}.

Rather than directly incorporating the sheaf Laplacian into the feature diffusion process of graph- or simplicial complex–based models, this paper investigates the potential of leveraging the sheaf structures naturally induced by the signals, weights, and topology of the underlying domain---specifically through local sections and the Alexandrov topology on graphs~\cite{alexandroff1937diskrete}---to analyze the training dynamics and geometric interpretation of signal behavior within GDL and TDL architectures. In particular, we focus on a theoretical model of the core attention mechanism in GATs through a cellular sheaf representation, examining its underlying topological and geometric interpretations. We further introduce the notion of a harmonic substructure of a sheaf on a graph, which quantifies how harmonically node signals are distributed with respect to a given sheaf structure. Moreover, from a topological data analysis (TDA) perspective~\cite{carlsson2004persistence,zomorodian2004computing,cohen2005stability,ghrist2008barcodes,carlsson2009topology}, we propose a multiscale framework for evaluating the harmonicity of sheaf structures. This perspective provides new insights and analytical tools for understanding and assessing tasks such as node classification, substructure detection, and community detection.

\section{Main Results}
The main results of this paper can be summarized in three parts: (a) the identification of the core attention mechanism in GAT models---comprising node features and learned attention weights---as a cellular sheaf on the given graph; (b) the harmonicity analysis of cellular sheaf structures defined on graphs; and (c) the development of a TDA-based multiscale framework for evaluating the harmonicity of cellular sheaves. 

Result (a) is presented in Theorem~\ref{Theorem: Main result 1}, which shows that any GAT triple (see Equation~\eqref{Eq. The GAT triple}) can be regarded as a cellular sheaf on the underlying graph. It is worth noting that the proposed cellular sheaf structure differs from that of SheafAN~\cite{barbero2022sheaf}. In~\cite{barbero2022sheaf}, the cellular sheaf is trained by learning the restriction linear maps and is integrated into the message-passing process alongside the attention weights for updating node features. In contrast, in our framework, the trained attention weights of the GAT model themselves define a cellular sheaf, and we investigate how these weights induce and characterize the corresponding sheaf structure. 

Result (b) is primarily presented in Definition~\ref{Definition: Main result 2-1, harmonic sets}, Theorem~\ref{Theorem: Main result 2-2}, and Corollary~\ref{Corollary: Main result 2-3}. Specifically, for a given cellular sheaf defined on a finite, undirected, simple, and unweighted graph with node feature vectors, we introduce the notions of harmonic node sets and harmonic edge sets. These harmonic structures quantify the degree of local alignment among node signals. 

Result (c) is presented in Theorem~\ref{Theorem: Main result 3-1} and Corollary~\ref{Corollary: Main result 3-2}, which establishes a TDA-oriented multiscale framework for evaluating the harmonicity of cellular sheaf structures. By constructing a \textit{filtration} of graphs based on result (b), this framework captures the evolution of local-to-global alignments among node signals across multiple scales. In particular, as a future direction, the resulting TDA summary, such as the \emph{persistence barcode} (see, for example, \cite{carlsson2004persistence,ghrist2008barcodes}), encodes the scale-dependent behavior of harmonic substructures, providing a quantitative characterization of cellular and node features on graphs.

\section{Mathematical Background}

\subsection{Conventions and Terminologies} 
In this paper, a \textit{graph} refers to a finite, undirected, simple, and unweighted graph. To identify a graph as a \textit{partially ordered set} (or \textit{poset}, for simplicity), we regard a graph $G = V \cup E$ as the union of its node set $V$ and edge set $E$. For a node $v \in V$ and an edge $e \in E$, the relation $v \unlhd e$ holds if and only if $v$ is an endpoint of $e$. This relation $\unlhd$ on $G$ is called the \textit{face relation}. Each edge $e \in E$ therefore has exactly two distinct nodes $v, w \in V$ such that $v \unlhd e$ and $w \unlhd e$. In addition, the reflexive relations $v \unlhd v$ and $e \unlhd e$ hold for every node $v \in V$ and edge $e \in E$. Hence, the pair $(G,\unlhd)$ forms a poset. The empty set $\emptyset$ is also regarded as a graph, known as the \textit{null graph},  which contains neither vertices nor edges. 

To define a cellular sheaf on a graph, this paper also employs the necessary elementary notions and terminologies from category theory. In particular, a graph $(G,\unlhd)$ can be identified with a \textit{poset category}, whose objects are the vertices and edges of $G$, and whose morphisms (or arrows) correspond to the partial order relations between these elements. On the other hand, this paper also considers the category of real vector spaces and linear transformations, denoted by $\text{Vect}_{\mathbb{R}}$, which consists of all $\mathbb{R}$-vector spaces as objects and $\mathbb{R}$-linear maps as morphisms.

\begin{figure}[t]
\centering
\includegraphics[width=0.75\columnwidth]{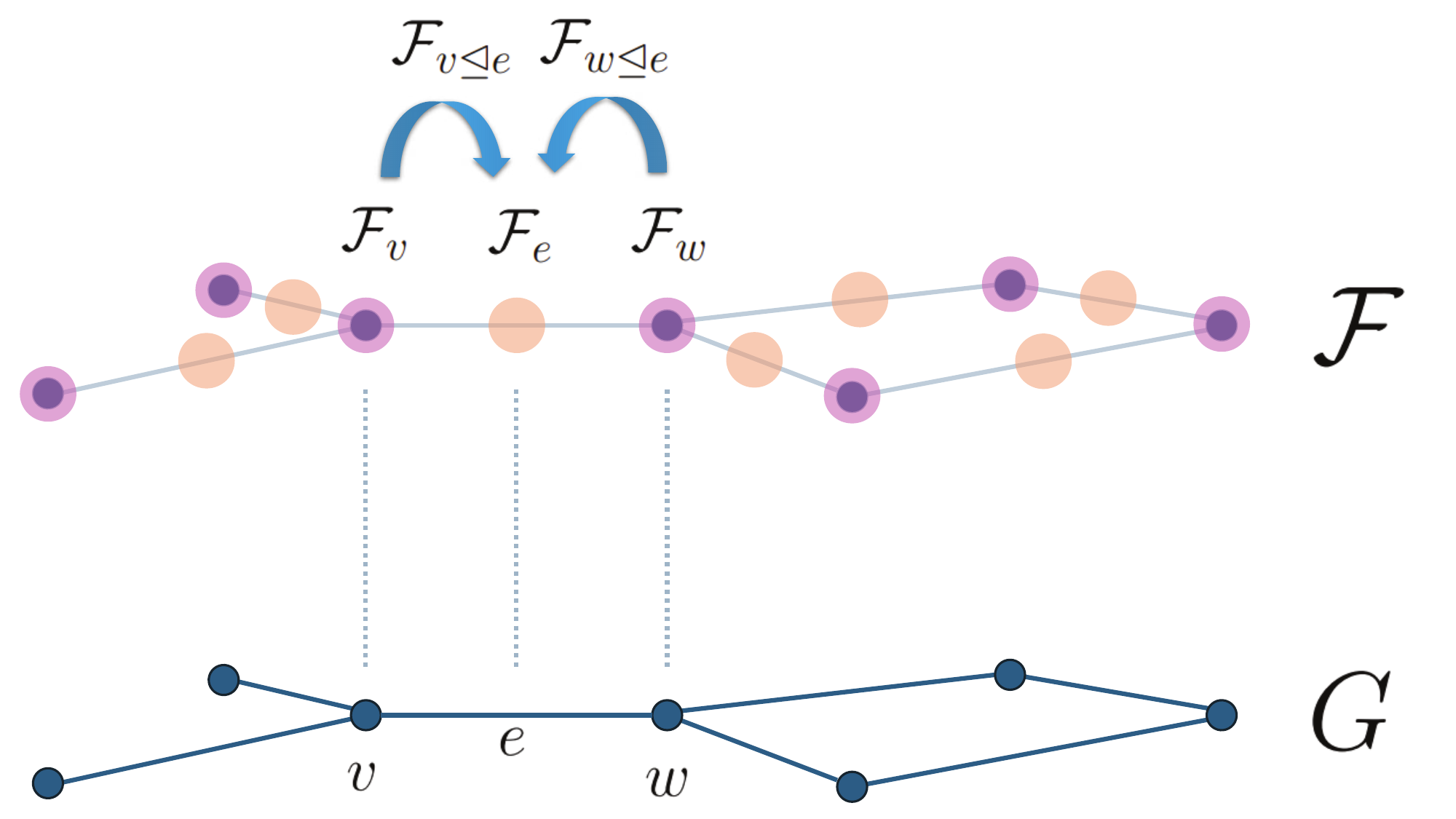} 
\caption{Illustration of a cellular sheaf $\mathcal{F}: (G,\unlhd) \to \text{Vect}_{\mathbb{R}}$ defined on a graph $G$. Disks on the top layer represent the stalk spaces assigned to the nodes and edges, whose dimensions may vary. In this example, two nodes $v$ and $w$, together with an edge $e$, are highlighted along with their corresponding stalk spaces $\mathcal{F}_v$, $\mathcal{F}_w$, and $\mathcal{F}_e$, as well as the restriction maps $\mathcal{F}_{v \unlhd e}$ and $\mathcal{F}_{w \unlhd e}$.}
\label{Fig_Illustration}
\end{figure}

\subsection{Cellular Sheaves on Graphs}

Viewing the graph $(G,\unlhd)$ as a poset category, a \textit{cellular sheaf} of $\mathbb{R}$-vector spaces on $G$ is defined as a functor $\mathcal{F} : (G,\unlhd) \to \text{Vect}_{\mathbb{R}}$. Equivalently, it consists of the following data: (a) a \textit{stalk space} $\mathcal{F}_v$ for each vertex $v \in V$; (b) a \textit{stalk space} $\mathcal{F}_e$ for each edge $e \in E$; and (c) a \textit{restriction map} $\mathcal{F}_{v,e} : \mathcal{F}_v \to \mathcal{F}_e$ for each pair $(v,e) \in V \times E$ satisfying $v \unlhd e$. Conventionally, when representing the coboundary matrix of a cellular sheaf $\mathcal{F}$ (see Equation~\eqref{Eq. Sheaf coboundary matrix}), we adopt the convention that $\mathcal{F}_{v,e}: \mathcal{F}_v \to \mathcal{F}_e$ is defined as the zero map whenever $v$ is not an endpoint of $e$. In particular, when the relation $v \unlhd e$ holds, the map $\mathcal{F}_{v,e}$ is also denoted by $\mathcal{F}_{v \unlhd e}$ to emphasize that $v$ is an endpoint of $e$. Figure~\ref{Fig_Illustration} presents a graphical illustration of a cellular sheaf defined on a graph, depicting both the underlying graph structure and the associated signal data defined on its nodes and edges.

\begin{exam.}[Constant Sheaf]
Let $(G, \unlhd)$ be a graph with node and edge sets $V$ and $E$, and let $W$ be an $\mathbb{R}$-vector space.  
The \textbf{constant sheaf} associated with $W$ over $G$, denoted by $\underline{W}$, is the cellular sheaf of $\mathbb{R}$-vector spaces on $G$ defined as follows:
\begin{itemize}
    \item $\underline{W}_v = W$ for every $v \in V$;
    \item $\underline{W}_e = W$ for every $e \in E$;
    \item $\underline{W}_{v \unlhd e} = \textup{\text{id}}_W = \underline{W}_{w \unlhd e}$ for each edge $e = \{v, w\} \in E$.
\end{itemize}
Here, the function $\textup{\text{id}}_W: W \to W$ denotes the identity map on the vector space $W$. By convention, $\underline{W}_{v,e}: \underline{W}_v \to \underline{W}_e$ denotes the zero map if $v$ is not an endpoint of $e$.
\end{exam.}

The constant sheaf on a graph assigns the same vector space to all nodes and edges and transfers node signals to adjacent edges via the identity map on that vector space. This construction encapsulates the entire graph structure by treating nodes and edges uniformly within the same signal space. In particular, it is a well-known result that the standard graph Laplacian coincides with the sheaf Laplacian of the constant sheaf $\underline{\mathbb{R}}$ (see Equation~\eqref{Eq. Sheaf Laplacian}).

\subsection{Sheaf Laplacian and Global Section Space}

To quantify the consistency of node signals of a graph encoded by a cellular sheaf $\mathcal{F}: (G,\unlhd) \to \text{Vect}_{\mathbb{R}}$, several fundamental constructions are essential. These include the $0$th and $1$st sheaf cochain spaces $C^0(G;\mathcal{F})$ and $C^1(G;\mathcal{F})$, the $0$th sheaf coboundary matrix $\mathbf{C}^0_{\mathcal{F}}: C^0(G;\mathcal{F}) \to C^1(G;\mathcal{F})$, the global section space $\Gamma(G;\mathcal{F})$, and the $0$th sheaf Laplacian $\mathbf{L}^0_{\mathcal{F}}: C^0(G;\mathcal{F}) \to C^0(G;\mathcal{F})$. We briefly review these fundamental notions below.

Specifically, for a cellular sheaf $\mathcal{F}: (G,\unlhd) \to \text{Vect}_{\mathbb{R}}$ on a graph $G = V \cup E$, the $0$th and $1$st \textit{sheaf cochain spaces} $C^0(G;\mathcal{F})$ and $C^1(G;\mathcal{F})$ are defined as the $\mathbb{R}$-vector spaces given by the direct sums
\begin{equation*}
C^0(G;\mathcal{F}) = \bigoplus_{v \in V} \mathcal{F}_v 
\quad \text{and} \quad 
C^1(G;\mathcal{F}) = \bigoplus_{e \in E} \mathcal{F}_e,
\end{equation*}
where elements of $C^0(G;\mathcal{F})$ and $C^1(G;\mathcal{F})$ are typically represented as the tuples $\mathbf{s} = (\mathbf{s}_v)_{v \in V}$ and $\mathbf{t} = (\mathbf{t}_e)_{e \in E}$, with $\mathbf{s}_v \in \mathcal{F}_v$ and $\mathbf{t}_e \in \mathcal{F}_e$ for each $v \in V$ and $e \in E$, respectively. Elements in $C^0(G; \mathcal{F})$ and $C^1(G; \mathcal{F})$ are called the $0$th and $1$st \textit{sheaf cochains} of the cellular sheaf $\mathcal{F}$. In particular, since $G$ is finite, each sheaf cochain $\mathbf{s} = (\mathbf{s}_v)_{v \in V}$ or $\mathbf{t} = (\mathbf{t}_e)_{e \in E}$ can be represented as column vectors, whose block entries correspond to the component vectors $\mathbf{s}_v \in \mathcal{F}_v$ and $\mathbf{t}_e \in \mathcal{F}_e$, respectively.

Suppose $G = V \cup E$ consists of $n$ nodes and $m$ edges, with vertex set $V = \{ v_1, \ldots, v_n \}$ and edge set $E = \{ e_1, \ldots, e_m \}$. The $0$th \textit{sheaf coboundary matrix} $\mathbf{C}^0_{\mathcal{F}}: C^0(G;\mathcal{F}) \to C^1(G;\mathcal{F})$ is defined as follows. 
By fixing a total order of the nodes $v_1 < v_2 < \cdots < v_n$, each edge $e \in E$ can be uniquely represented as an ordered pair $[v,w]$, where $v,w \in V$ are the endpoints of $e$ with $v < w$. 
The corresponding \textit{sign-incidence indices} $[v:e]$ and $[w:e]$ are then defined by
\begin{equation*}
[v:e] = -1 \quad \text{and} \quad [w:e] = 1.
\end{equation*}
If $u \in V$ is not an endpoint of $e \in E$, the sign-incidence index $[u:e]$ is conventionally set to $0$. With these settings, the $0$th sheaf coboundary matrix $\mathbf{C}^0_{\mathcal{F}}: C^0(G;\mathcal{F}) \to C^1(G;\mathcal{F})$ can be expressed as the block matrix
\begin{equation}\label{Eq. Sheaf coboundary matrix}
\mathbf{C}^0_{\mathcal{F}} = \begin{bmatrix}
[v_1:e_1] \mathcal{F}_{v_1,e_1} & \cdots & [v_n:e_1]  \mathcal{F}_{v_n,e_1} \\
[v_1:e_2] \mathcal{F}_{v_1,e_2} & \cdots & [v_n:e_2] \mathcal{F}_{v_n,e_2} \\
\vdots & \ddots & \vdots \\
[v_1:e_m] \mathcal{F}_{v_1,e_m} & \cdots & [v_n:e_m] \mathcal{F}_{v_n,e_m} 
\end{bmatrix}.
\end{equation}
In particular, for every vector $\mathbf{s} = (\mathbf{s}_v)_{v \in V} \in C^0(G;\mathcal{F})$, the vector $\mathbf{C}^0_{\mathcal{F}}\mathbf{s} = \mathbf{t} = (\mathbf{t}_e)_{e \in E} \in C^1(G;\mathcal{F})$ 
can be expressed as the column vector
\begin{equation*}
\mathbf{C}^0_{\mathcal{F}} \cdot \begin{bmatrix} 
\mathbf{s}_{v_1} \\
\mathbf{s}_{v_2} \\
\vdots \\
\mathbf{s}_{v_n} \\
\end{bmatrix} = \begin{bmatrix} 
\mathbf{t}_{e_1} \\
\mathbf{t}_{e_2} \\
\vdots \\
\mathbf{t}_{e_m} \\
\end{bmatrix} = \begin{bmatrix} 
\sum_{i=1}^n [v_i:e_1] \mathcal{F}_{v_i,e_1} \mathbf{s}_{v_i} \\
\sum_{i=1}^n [v_i:e_2] \mathcal{F}_{v_i,e_2} \mathbf{s}_{v_i} \\
\vdots \\
\sum_{i=1}^n [v_i:e_m] \mathcal{F}_{v_i,e_m} \mathbf{s}_{v_i} \\
\end{bmatrix}  
\end{equation*}

More precisely, since each edge has exactly two endpoints, every block row of the matrix $\mathbf{C}^0_{\mathcal{F}}$ contains at most two nonzero blocks. For each edge $e_l$ ($l \in \{1, 2, \ldots, m\}$) with endpoints $v_j < v_k$ ($j, k \in \{1, 2, \ldots, n\}$), the $l$th block of the vector $\mathbf{t}$ is given by
\begin{equation}\label{Eq. Sheaf coboundary block}
\mathbf{t}_{e_l} = \mathcal{F}_{v_k, e_l} \mathbf{s}_{v_k} - \mathcal{F}_{v_j, e_l} \mathbf{s}_{v_j}.
\end{equation}
Specifically, $\mathbf{t}_{e_l}$ is the zero vector if and only if the transferred edge signals 
$\mathcal{F}_{v_k, e_l} \mathbf{s}_{v_k}$ and $\mathcal{F}_{v_j, e_l} \mathbf{s}_{v_j}$ are identical. This observation explains the definition of the \textit{global section space} 
$\Gamma(G;\mathcal{F})$ of $\mathcal{F}$, which consists of all vectors $(\mathbf{s}_v)_{v \in V} \in C^0(G;\mathcal{F})$ satisfying Equation~\eqref{Eq. Sheaf coboundary block} with $\mathbf{t}_{e_l} = \mathbf{0}$ for all $l$; these vectors are called the \textit{global sections} of $\mathcal{F}$. That is, the global section space is the kernel of the sheaf coboundary matrix $\mathbf{C}^0_{\mathcal{F}}$, i.e., $\Gamma(G;\mathcal{F}) = \ker(\mathbf{C}^0_{\mathcal{F}})$.

With the $0$th sheaf coboundary matrix $\mathbf{C}^0_{\mathcal{F}}$ of a cellular sheaf $\mathcal{F}: (G,\unlhd) \to \text{Vect}_{\mathbb{R}}$, the $0$th \textit{sheaf Laplacian matrix} of $\mathcal{F}$ is defined as
\begin{equation}\label{Eq. Sheaf Laplacian}
\mathbf{L}^0_\mathcal{F} = (\mathbf{C}^0_{\mathcal{F}})^\top \mathbf{C}^0_{\mathcal{F}},
\end{equation}
which acts as a linear operator on the $0$th sheaf cochain space $C^0(G;\mathcal{F})$.
By definition, $\mathbf{L}^0_\mathcal{F}$ is symmetric and positive semi-definite. Consequently, it is diagonalizable, and all its eigenvalues are non-negative. Therefore, there exists an $\mathbb{R}$-basis of $C^0(G;\mathcal{F})$ consisting of eigenvectors of $\mathbf{L}^0_\mathcal{F}$. In particular, any node signal vector $\mathbf{s} \in C^0(G;\mathcal{F})$ can be uniquely expressed as a linear combination of these eigenvectors. Furthermore, by the Gram–Schmidt algorithm, the $\mathbb{R}$-basis can be chosen to be orthonormal.

The global section space and the sheaf Laplacian matrix of a cellular sheaf on a graph are closely related. Specifically, by the well-known application of the combinatorial Hodge decomposition theorem (see, for example,~\cite{hansen2019toward}) to the (co)chain complexes of finite-dimensional $\mathbb{R}$-vector spaces, one obtains
\begin{equation}\label{Eq. Hodge decomposition theorem}
\ker(\mathbf{L}_{\mathcal{F}}^0) = \ker(\mathbf{C}_{\mathcal{F}}^0) = \Gamma(G;\mathcal{F}).
\end{equation}
In other words, a vector $\mathbf{s} \in C^0(G;\mathcal{F})$ is a global section of $\mathcal{F}$ if and only if it is an eigenvector of the sheaf Laplacian matrix $\mathbf{L}_{\mathcal{F}}^0$ corresponding to the eigenvalue~$0$.

\begin{figure}[t]
\centering
\includegraphics[width=0.85\columnwidth]{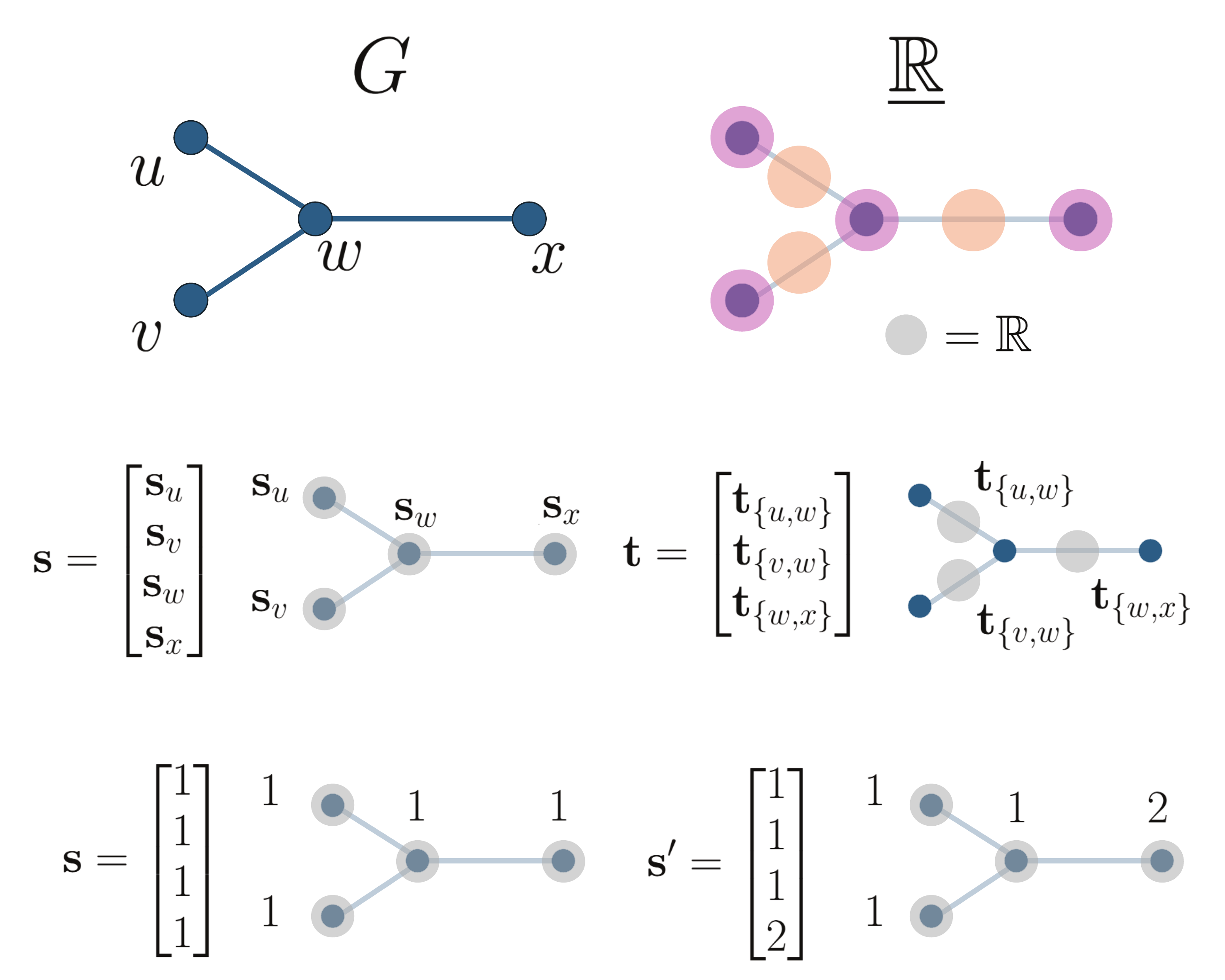} 
\caption{An illustrative example of a graph $G$ and the constant sheaf $\underline{\mathbb{R}}: (G,\unlhd) \to \text{Vect}_{\mathbb{R}}$. 
The graph $G$ consists of four vertices $u, v, w, x$ and three edges $\{u, w\}$, $\{v, w\}$, and $\{w, x\}$. The second row depicts the graphical correspondences of arbitrary elements $\mathbf{s} \in C^0(G; \underline{\mathbb{R}})$ and 
$\mathbf{t} \in C^1(G; \underline{\mathbb{R}})$ with the graphical representation of $\underline{\mathbb{R}}$. The third row depicts the cases where $\mathbf{s}, \mathbf{s}' \in C^0(G; \underline{\mathbb{R}})$, with $\mathbf{s} \in \Gamma(G; \underline{\mathbb{R}})$ and $\mathbf{s}' \notin \Gamma(G; \underline{\mathbb{R}})$.}
\label{Fig: Global sections}
\end{figure}

However, numerous collections of node signals of a graph may not behave as a global section of the sheaf, such as an eigenvector of $\mathbf{L}_{\mathcal{F}}^0$ corresponding to a nonzero eigenvalue. Figure~\ref{Fig: Global sections} illustrates a toy example of the constant sheaf 
$\underline{\mathbb{R}}: (G, \unlhd) \to \text{Vect}_{\mathbb{R}}$ defined on a graph, showing both a global section $\mathbf{s}$ and a $0$th sheaf cochain $\mathbf{s}'$ that does not constitute a global section.

\subsection{Alexandrov Topology and Local Sections}

In order to analyze the distribution of locally consistent components of a vector $\mathbf{s} \in C^0(G;\mathcal{F})$ associated with a cellular sheaf $\mathcal{F}$ on a graph $(G,\unlhd)$, it is necessary to introduce the concepts of the Alexandrov topology on $G$ and the local sections of $\mathcal{F}$. For completeness, we first provide a concise introduction to these notions in the general setting of posets~\cite{alexandroff1937diskrete}, and then specialize the discussion to their geometric interpretation on graphs.

Mathematically, a subset $U \subseteq P$ of a poset $(P,\leq)$ is called \textit{Alexandrov-open} if it satisfies the following condition: for any $x \in U$ and $y \in P$ with $x \leq y$, one has $y \in U$. The collection of all \textit{Alexandrov-open} subsets of $P$ forms a topology on $P$. Dually, a subset $C \subseteq P$ is called \textit{Alexandrov-closed} if, for any $x \in C$ and $y \in P$ with $y \leq x$, one has $y \in C$. In particular, a subset $U \subseteq P$ is Alexandrov-open if and only if its complement $P \setminus U$ is Alexandrov-closed. Moreover, the empty set $\emptyset$ and the entire set $P$ are both Alexandrov-open and Alexandrov-closed.

A particular type of Alexandrov-open set, called the \textit{star-open set} of a poset element, is of special interest. Formally, for a poset $(P,\leq)$ and an element $p \in P$, the \textit{star-open set} $U_p$ is defined as the minimal Alexandrov-open neighborhood containing $p$, that is,
\begin{equation*}
U_p = \{\, q \in P \mid p \leq q \,\},
\end{equation*}
where the minimality of $U_p$ means that the following property holds: if $p \in P$ and $U \subseteq P$ is an Alexandrov-neighborhood of $p$, then $U_p \subseteq U$.  It is well-known that the collection $\{\, U_p \mid p \in P \,\}$ of star-open sets in $P$ forms a basis for the Alexandrov topology, i.e., every Alexandrov-open subset of $P$ can be expressed as a union of a family of star-open sets in $P$.

In particular, for a graph $G = V \cup E$ with vertex and edge sets $V$ and $E$, the star open set $U_e$ corresponding to an edge $e \in E$ is precisely the singleton set $\{ e \}$, while the star open set $U_v$ associated with a vertex $v \in V$ consists of $v$ together with all edges incident to $v$. We recall that a \textit{subgraph} $H$ of a graph $G = V \cup E$ is a subset of $G$ satisfying the following property: for every edge $e \in E \cap H$ with endpoints $v, w \in V$, both $v$ and $w$ are contained in $H$. In particular, the following well-known proposition describes the relationship between the Alexandrov topology and the subgraph family of a graph. Figure \ref{Fig: Alexandrov topology} displays an example of a graph $G$ and some Alexandrov open subsets with respect to the face relation $\unlhd$ on $G$.

\begin{prop.}
A subset of a graph is a subgraph if and only if it is Alexandrov-closed.    
\end{prop.}  

With the Alexandrov topology on a given poset, one can define local sections of a cellular sheaf defined on that poset. Formally, let $\mathcal{F}: (G,\unlhd) \to \text{Vect}_{\mathbb{R}}$ be a cellular sheaf on a graph $(G,\unlhd)$ equipped with the Alexandrov topology, and let $U \subseteq G$ be an Alexandrov-open set. A tuple $(\mathbf{s}_p)_{p \in U}$ with $\mathbf{s}_p \in \mathcal{F}_p$ for each $p \in U$ is called a \textit{local section} of $\mathcal{F}$ on $U$ if it satisfies the following condition: for every $p, q \in U$ and $r \in U$ with $p \unlhd r$ and $q \unlhd r$, one has
\begin{equation}\label{Eq. Local section definition}
\mathcal{F}_{p\unlhd r}(\mathbf{s}_{p}) = \mathcal{F}_{q\unlhd r}(\mathbf{s}_{q}).
\end{equation}
The set of all local sections of the sheaf $\mathcal{F}$ on an Alexandrov-open set $U$ is denoted by $\mathcal{F}(U)$. It forms a subspace of the $\mathbb{R}$-vector space $\prod_{p \in U} \mathcal{F}_p$ with the canonical addition and scalar multiplication operations. Note that the spaces $\prod_{p \in U} \mathcal{F}_p$ and $\bigoplus_{p \in U} \mathcal{F}_p$ coincide when $G$ is finite, as in the case of the \textit{graphs} considered in this paper.

\begin{figure}[t]
\centering
\includegraphics[width=0.5\columnwidth]{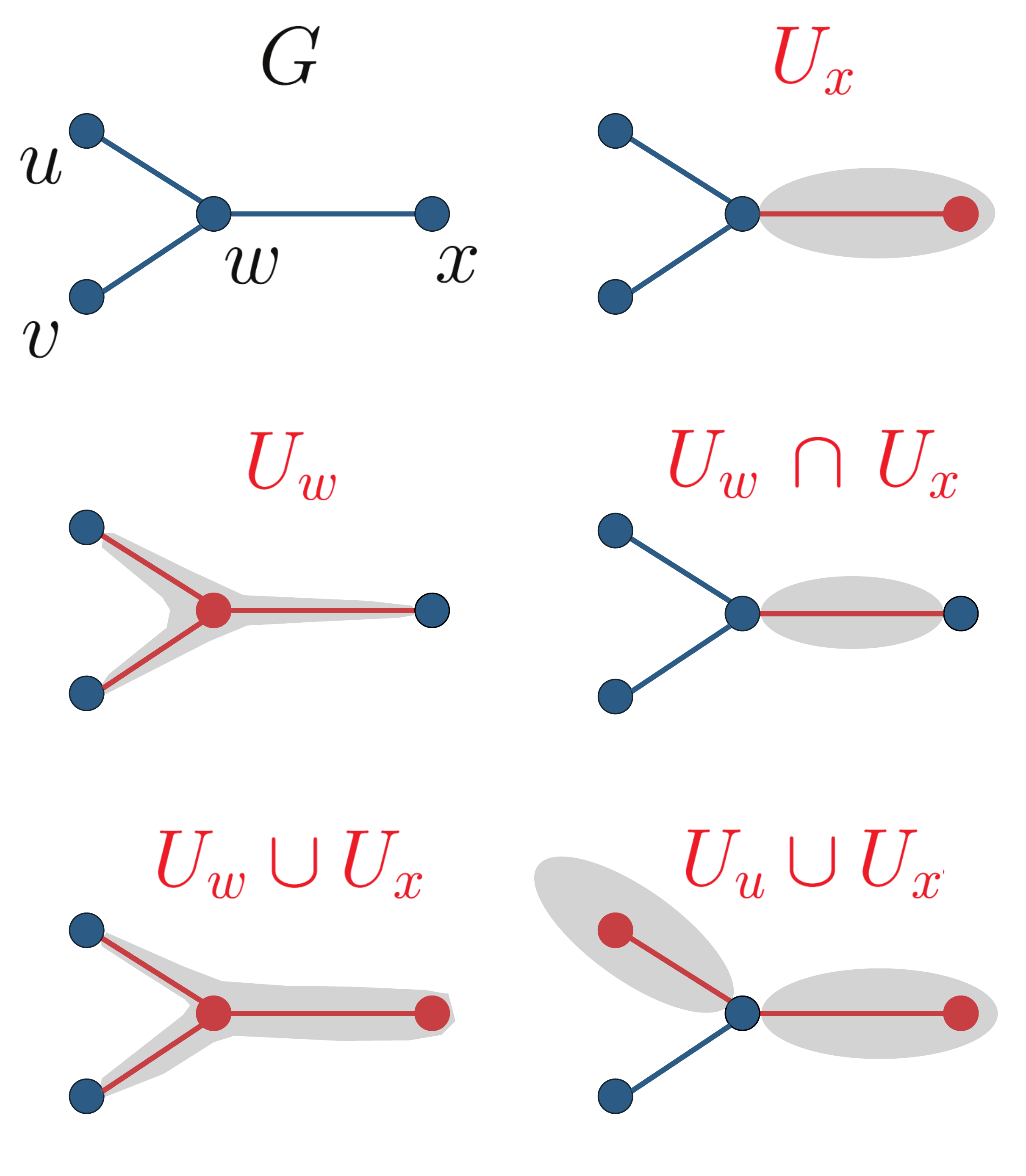} 
\caption{An example of a graph $(G,\unlhd)$ with vertex set $V=\{u,v,w,x\}$ and edge set $E=\{\{u,w\},\{v,w\},\{w,x\}\}$. With respect to the partial order $\unlhd$ on $G$, the Alexandrov open sets $U_x$, $U_w$, $U_w \cap U_x$, $U_w \cup U_x$, and $U_u \cup U_x$ are shown as the collections of vertices and edges covered by the shaded regions.}
\label{Fig: Alexandrov topology}
\end{figure}

\section{Framework and Methodology}
 
This section elaborates on the theoretical framework underlying the main results (a), (b), and (c), presenting their detailed formulations, proofs, and interpretations.

\subsection{Cellular Sheaf Modeling of Feature Attention Mechanisms in GNN Models}

During the training of GAT models, the feature or signal of each node is updated based on its neighboring nodes and the corresponding directional attention weights from the neighbors to that node. Mathematically, this information can be represented by the triple
\begin{equation}\label{Eq. The GAT triple}
(G, (\mathbf{s}_{v_i})_{i = 1}^n, \mathbf{W}),
\end{equation}
where $G = V \cup E$ denotes the underlying graph with vertex set $V = \{ v_1, v_2, \ldots, v_n \}$ and edge set $E$, $(\mathbf{s}_{v_i})_{i = 1}^n$ is the collection of $d$-dimensional signal (or feature) vectors on the nodes $v_i$, and $\mathbf{W} = (w_{ij}) \in \mathbb{R}^{n \times n}$ is the weight matrix of the attention mechanism. Each entry $w_{ij}$ represents the directional weight from node $v_i$ to node $v_j$ used in the feature aggregation process. Note that $\mathbf{W}$ is not necessarily symmetric, and $w_{ij} = 0$ whenever $\{v_i, v_j\} \notin E$. In particular, since the graph is assumed to be simple, we consider the case where $w_{ii} = 0$ for all $i$.

Specifically, for a given GAT triple defined in Equation~\eqref{Eq. The GAT triple} and an index $i \in \{1, 2, \ldots, n\}$, the core attention mechanism of the GAT, excluding the activation functions, aggregates the features $\mathbf{s}_{v_j}$ of the neighbors of $v_i$ as a linear combination
\begin{equation*}
\sum_{\substack{j \in \{1, \ldots, n\} \\ v_j \in \mathcal{N}(v_i)}} w_{ji} \cdot \mathbf{s}_{v_j},
\end{equation*}
where $\mathcal{N}(v_i)$ denotes the set of nodes in $G$ that are adjacent to $v_i$. In practice, the coefficients $w_{ji}$ are typically assumed to lie within the closed interval $[0,1]$ and satisfy the normalization condition $\sum_j w_{ji} = 1$. In general, a larger coefficient $w_{ji}$ indicates that the node feature $\mathbf{s}_{v_j}$ exerts a stronger influence on the representation of node $v_i$.

\begin{theorem}\label{Theorem: Main result 1}
Let $(G, (\mathbf{s}_{v_i})_{i = 1}^n, \mathbf{W})$ denote a triple defined as in Equation~\eqref{Eq. The GAT triple}, where $G = V \cup E$ and $\mathbf{s}_{v_i} \in \mathbb{R}^d$ for each $i \in \{1, \ldots, n\}$. Then, a cellular sheaf $\mathcal{F} : (G, \unlhd) \to \textup{\text{Vect}}_{\mathbb{R}}$ is defined as follows: (a) $\mathcal{F}_{v} := \mathbb{R}^d$ for each $v \in V$; (b) $\mathcal{F}_{e} := \mathbb{R}^d$ for each edge $e \in E$; and (c) $\mathcal{F}_{v_i \unlhd \{v_i, v_j\}}: \mathcal{F}_{v_i} \to \mathcal{F}_{\{v_i, v_j\}}$ is the scalar multiplication by $w_{ij}$ for every edge $\{v_i, v_j\} \in E$ with endpoints $v_i, v_j \in V$.
\end{theorem}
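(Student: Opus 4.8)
The plan is to prove the statement by direct verification that the prescribed data determines a functor $\mathcal{F}\colon (G,\unlhd) \to \text{Vect}_{\mathbb{R}}$, since by the characterization recalled in the Cellular Sheaves subsection this is exactly what it means to specify a cellular sheaf of $\mathbb{R}$-vector spaces on $G$. Viewing $(G,\unlhd)$ as a poset category whose objects are the vertices and edges and whose only non-identity morphisms are the face relations $v \unlhd e$, the verification reduces to three routine checks: that the object assignment lands in $\text{Vect}_{\mathbb{R}}$, that each restriction map is a well-defined $\mathbb{R}$-linear morphism in the correct direction, and that the whole assignment respects identities and composition. I would begin by recording the object assignment $\mathcal{F}_v = \mathbb{R}^d$ and $\mathcal{F}_e = \mathbb{R}^d$, which are evidently objects of $\text{Vect}_{\mathbb{R}}$, and by sending each identity morphism $\text{id}_v$, $\text{id}_e$ to the identity map $\text{id}_{\mathbb{R}^d}$.

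The substantive step is the well-definedness of the restriction maps. Here I would invoke the fact, recorded in the Conventions subsection, that each edge $e \in E$ has exactly two distinct endpoints; hence, given an endpoint $v_i \unlhd e$, the remaining endpoint $v_j$ is uniquely determined by the pair $(v_i, e)$. Consequently the scalar $w_{ij}$ is unambiguous, and $\mathcal{F}_{v_i \unlhd e}\colon \mathbb{R}^d \to \mathbb{R}^d$, $\mathbf{x} \mapsto w_{ij}\mathbf{x}$, is a well-defined map; since scalar multiplication on $\mathbb{R}^d$ is $\mathbb{R}$-linear, it is a morphism in $\text{Vect}_{\mathbb{R}}$ with the direction $\mathcal{F}_{v_i} \to \mathcal{F}_{\{v_i,v_j\}}$ required by the statement. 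I would stress that no symmetry of $\mathbf{W}$ is needed: the two endpoints of $e$ carry the two independent restriction maps $\mathcal{F}_{v_i \unlhd e}$ and $\mathcal{F}_{v_j \unlhd e}$, given by scalar multiplication by $w_{ij}$ and $w_{ji}$ respectively, and the cellular-sheaf definition imposes no relation between them.

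Finally, for functoriality I would observe that the poset $(G,\unlhd)$ has height one: every edge is maximal and every vertex is minimal, so no non-identity morphism is composable with another non-identity morphism. Thus the only compositions to check are those involving an identity, namely $(v \unlhd e)\circ \text{id}_v = v \unlhd e$ and $\text{id}_e \circ (v \unlhd e) = v \unlhd e$, and these hold automatically once identities are sent to identities, which is true by construction. Hence $\mathcal{F}$ is a functor and therefore a cellular sheaf. The only place requiring care is the bookkeeping in the well-definedness step—confirming that the label $w_{ij}$ attached to a given endpoint of a given edge is intrinsic rather than an artifact of how the unordered pair $\{v_i,v_j\}$ is written; everything else is forced by the height-one structure of the graph poset, so I do not anticipate any genuine obstacle.
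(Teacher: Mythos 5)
Your proposal is correct and follows essentially the same route as the paper's own proof: a direct verification that the stalk assignments land in $\text{Vect}_{\mathbb{R}}$ and that each restriction map, being a scalar multiplication on $\mathbb{R}^d$, is $\mathbb{R}$-linear. Your additional checks (well-definedness of $w_{ij}$ for an unordered edge, and functoriality being automatic since the graph poset has height one) are sound elaborations of points the paper leaves implicit in its ``equivalent data'' characterization of a cellular sheaf.
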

\begin{proof}
The stalk spaces $\mathcal{F}_{v}$ and $\mathcal{F}_{e}$ are vector spaces over $\mathbb{R}$ for every $v \in V$ and $e \in E$. Since each scalar multiplication $\mathbb{R}^d \to \mathbb{R}^d$, given by $\mathbf{x} \mapsto \lambda \mathbf{x}$ for a fixed $\lambda \in \mathbb{R}$, is $\mathbb{R}$-linear, the above construction defines a cellular sheaf $\mathcal{F}: (G, \unlhd) \to \textup{\text{Vect}}_{\mathbb{R}}$.
\end{proof}
More generally, suppose $\mathcal{G} : (G, \unlhd) \to \textup{\text{Vect}}_{\mathbb{R}}$ is a cellular sheaf on $G$ such that $\mathcal{G}_v = \mathbb{R}^d = \mathcal{G}_e$ for every $v \in V$ and $e \in E$. Then, the following assignment defines a more general cellular sheaf structure that extends the conventional attention aggregation formula:
\begin{equation*}
\sum_{\substack{j \in \{1, \ldots, n\} \\ v_j \in \mathcal{N}(v_i)}} 
w_{ji} \, \mathcal{G}_{v_j, \{ v_i, v_j \}} \, \mathbf{s}_{v_j},
\end{equation*}
which performs the feature aggregation through the sheaf-induced restriction maps $\mathcal{G}_{v_j, \{ v_i, v_j \}} \in \mathbb{R}^{d \times d}$ and the attention weights $w_{ji}$, allowing a more complex attention mechanism.

Figure~\ref{Fig: GAT to sheaf identification} provides a graphical representation of the cellular sheaf, defined in Theorem~\ref{Theorem: Main result 1}, associated with a GAT triple $(G, (\mathbf{s}_{v_i})_{i = 1}^n, \mathbf{W})$. In this example, the first image shows the attention weights originating from node $u$ to its neighbors, while the second image shows the attention weights from $u$'s neighbors to node $u$. The third image illustrates the induced cellular sheaf defined in Theorem~\ref{Theorem: Main result 1}.

\begin{figure*}[t]
\centering
\includegraphics[width=0.75\textwidth]{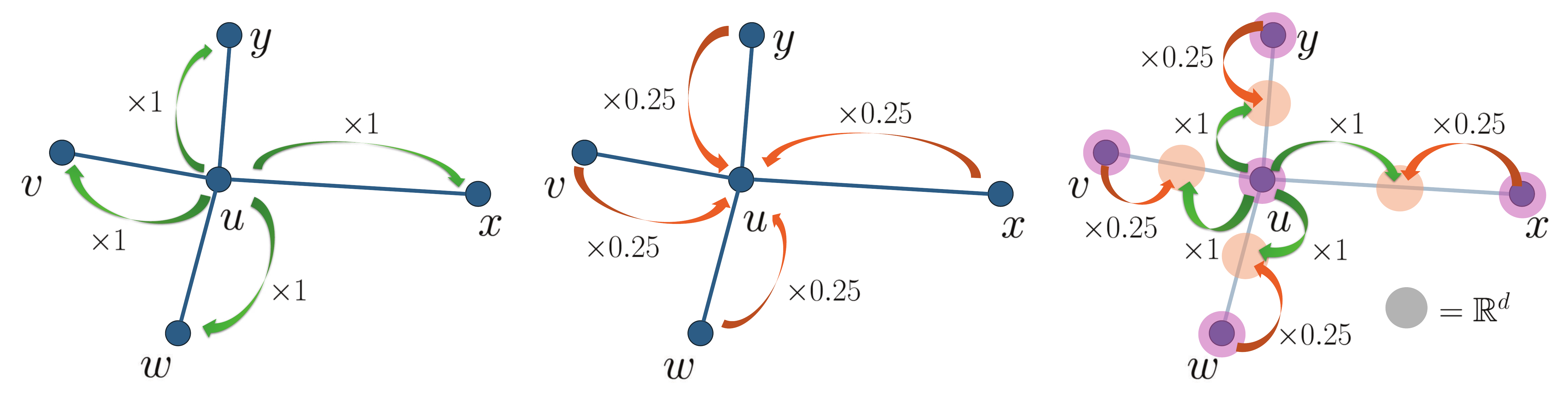} 
\caption{An illustrative example of the cellular sheaf identification of a GAT triple $(G, (\mathbf{s}_{u}, \mathbf{s}_{v}, \mathbf{s}_{w}, \mathbf{s}_{x}, \mathbf{s}_{y}), \mathbf{W} = (w_{ij}))$ with $i, j \in \{ u, v, w, x, y\}$ is shown, where the graph has the vertex set $V = \{u, v, w, x, y\}$ and the edge set $E = \{\{u, v\}, \{u, w\}, \{u, x\}, \{u, y\}\}$. The left image depicts the weights $w_{uv} = w_{uw} = w_{ux} = w_{uy} = 1$, while the middle image shows the weights $w_{vu} = w_{wu} = w_{xu} = w_{yu} = 0.25$. The right image illustrates the corresponding cellular sheaf defined as in Theorem~\ref{Theorem: Main result 1}.}
\label{Fig: GAT to sheaf identification}
\end{figure*}

\subsection{Sheaf Harmonic Structures for Graph Signals}
  
Given a cellular sheaf defined on a graph and a $0$th sheaf cochain representing a collection of node signals, we introduce the notion of harmonic sets to investigate the local harmonicity of node signals along edges. Specifically, consider the following setting. Let $\mathcal{F}: (G, \unlhd) \to \textup{\text{Vect}}_{\mathbb{R}}$ be a cellular sheaf, and let $\mathbf{C}^0 := \mathbf{C}_{\mathcal{F}}^0$ denote the $0$th sheaf coboundary matrix associated with $\mathcal{F}$. For a node signal vector $\mathbf{s} \in C^0(G;\mathcal{F})$, write $\mathbf{t} = (\mathbf{t}_e)_{e \in E} := \mathbf{C}^0 \mathbf{s}$. We then define the \textit{harmonic edges} and \textit{harmonic nodes} of the graph as follows.

\begin{def.}\label{Definition: Main result 2-1, harmonic sets}
An edge $e \in E$ with endpoints $v, w \in V$ is called a \textbf{harmonic edge} if $\mathbf{t}_e = \mathbf{0}$, that is, if $\mathcal{F}_{v, e} \mathbf{s}_{v} = \mathcal{F}_{w, e} \mathbf{s}_{w}$. Similarly, a node $v \in V$ is called a \textbf{harmonic node} if either $v$ has degree $0$, or there exists an edge $e \in E$ with $v \unlhd e$ such that $\mathbf{t}_e = \mathbf{0}$. The collections of all harmonic edges and harmonic nodes are denoted by $\textup{\text{Har}}_1(\mathbf{s})$ and $\textup{\text{Har}}_0(\mathbf{s})$, respectively. The \textbf{harmonic set} associated with $\mathbf{s}$ is defined the union $\textup{\text{Har}}(\mathbf{s}) := \textup{\text{Har}}_0(\mathbf{s}) \cup \textup{\text{Har}}_1(\mathbf{s})$.
\end{def.}

By the definition of the harmonic set associated with an $\mathbf{s} \in C^0(G;\mathcal{F})$, the following property holds: if $e \in \textup{\text{Har}}_1(\mathbf{s}) \subseteq \textup{\text{Har}}(\mathbf{s})$ and $v$ is an endpoint of $e$, then $v \in \textup{\text{Har}}_0(\mathbf{s}) \subseteq \textup{\text{Har}}(\mathbf{s})$. Furthermore, by definition, $\textup{\text{Har}}(\mathbf{s}) = G$ if and only if $\textup{\text{Har}}_1(\mathbf{s}) = E$. This implies that $\textup{\text{Har}}(\mathbf{s})$ forms a subgraph of $G$, and hence $\textup{\text{Har}}(\mathbf{s})$ is an Alexandrov-closed subset of $G$. In particular, $\textup{\text{Har}}(\mathbf{s}) = G$ if and only if $\mathbf{s}$ is a global section of $\mathcal{F}$ over $G$. These observations can be summarized in the following theorem.

\begin{theorem}\label{Theorem: Main result 2-2}
Let $\mathcal{F}$ be a cellular sheaf of $\mathbb{R}$-vector spaces on a graph $(G,\unlhd)$, and let $\mathbf{s} \in C^0(G;\mathcal{F})$ be a vector of node signals on $G$. Then,  $\textup{\text{Har}}(\mathbf{s}) = G$ if and only if $\mathbf{s} \in \Gamma(G;\mathcal{F})$. Furthermore, $\textup{\text{Har}}(\mathbf{s})$ is a subgraph consisting of a union of connected components of $G$ if and only if $\textup{\text{Har}}(\mathbf{s})$ is Alexandrov-open.    
\end{theorem}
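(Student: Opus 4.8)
The plan is to treat the two biconditionals separately, leaning on the structural observations recorded immediately before the statement. For the first equivalence, I would simply chain definitions. Since $\text{Har}_0(\mathbf{s}) \subseteq V$ and $\text{Har}_1(\mathbf{s}) \subseteq E$, the equality $\text{Har}(\mathbf{s}) = G = V \cup E$ forces $\text{Har}_1(\mathbf{s}) = E$, i.e.\ every edge is harmonic. By Definition~\ref{Definition: Main result 2-1, harmonic sets} and Equation~\eqref{Eq. Sheaf coboundary block}, $\text{Har}_1(\mathbf{s}) = E$ means $\mathbf{t}_e = \mathbf{0}$ for all $e \in E$, i.e.\ $\mathbf{t} = \mathbf{C}^0 \mathbf{s} = \mathbf{0}$, which is precisely $\mathbf{s} \in \ker(\mathbf{C}^0) = \Gamma(G;\mathcal{F})$ by Equation~\eqref{Eq. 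Hodge decomposition theorem}. Conversely, if $\mathbf{s}$ is a global section then all edges are harmonic, so (using the recorded property that an endpoint of a harmonic edge is a harmonic node) every non-isolated node lies in $\text{Har}_0(\mathbf{s})$, while isolated nodes lie there by definition; hence $\text{Har}_0(\mathbf{s}) = V$ and $\text{Har}(\mathbf{s}) = G$. This half is essentially bookkeeping over the definitions.

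The substance lies in the ``Furthermore'' clause. The starting point is that $\text{Har}(\mathbf{s})$ is always a subgraph---its only edges come from $\text{Har}_1(\mathbf{s})$, and each such edge contributes both endpoints to $\text{Har}_0(\mathbf{s})$---so by the Proposition it is always Alexandrov-closed. The claim therefore reduces to a purely topological fact about an arbitrary Alexandrov-closed subgraph $H \subseteq G$: such an $H$ is Alexandrov-open if and only if it is a union of connected components of $G$. The engine is the explicit description of the Alexandrov topology on $G$: since the only nontrivial order relations are $v \unlhd e$ with $v$ an endpoint of $e$, a set is Alexandrov-open exactly when it contains, for each of its nodes $v$, the whole star $U_v$ consisting of $v$ and all incident edges.

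For the easy direction (union of components $\Rightarrow$ open), I would take a node $v \in H$; any edge $e$ incident to $v$ belongs to the connected component of $v$, which is contained in $H$, so $U_v \subseteq H$ and $H$ is open. For the converse (open $\Rightarrow$ union of components), the key step is a path-induction: given $v \in H$ and an element $p$ in the connected component $C_v$ of $v$, choose a walk from $v$ to $p$ and argue inductively that each successive node and edge lies in $H$---openness pushes from a node to its incident edges, and Alexandrov-closedness (the subgraph property) pulls an edge back to both its endpoints. This yields $C_v \subseteq H$ for every node $v \in H$, and since every edge of $H$ drags in its endpoints by closedness, $H$ coincides with the union of the components of its elements.

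The main obstacle I anticipate is getting the path-induction exactly right: making explicit the alternating use of openness and closedness at each step of the walk, and cleanly handling isolated (degree-$0$) nodes, whose components are singletons and therefore require no edge-based argument. Everything else is a direct translation of the definitions and of the earlier structural remarks.
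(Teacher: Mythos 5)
Your proposal is correct and follows essentially the same route as the paper: the first equivalence is the same definitional bookkeeping, and your path-induction for ``open $\Rightarrow$ union of components'' (openness pushes a node to its incident edges, closedness pulls an edge back to its endpoints) is exactly the paper's argument, which phrases the same idea as a maximal-index contradiction along a path. The only cosmetic difference is the easy direction, where you verify openness directly via stars $U_v$ while the paper argues that the complement is a union of connected components and hence Alexandrov-closed; both are immediate.
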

\begin{proof}
Let $\mathbf{C}^0: C^0(G;\mathcal{F}) \to C^1(G;\mathcal{F})$ be the $0$th sheaf coboundary matrix of $\mathcal{F}$, and let $\mathbf{t} = (\mathbf{t}_e)_{e \in E} = \mathbf{C}^0 \mathbf{s}$. Note that $\mathbf{t}_e = \mathbf{0}$ if and only if $\mathcal{F}_{w \unlhd e}(\mathbf{s}_w) = \mathcal{F}_{v \unlhd e}(\mathbf{s}_v)$, where $v$ and $w$ are the endpoints of $e$. In particular, $\textup{\text{Har}}_1(\mathbf{s}) = E$ if and only if $\mathbf{s} \in \Gamma(G;\mathcal{F})$. Since $\textup{\text{Har}}(\mathbf{s}) = G$ if and only if $\textup{\text{Har}}_1(\mathbf{s}) = E$, the first statement follows.

For the second statement, suppose that $\textup{\text{Har}}(\mathbf{s})$ is a subgraph consisting of a union of connected components of $G$. Because $G$ is finite, the complement $G \setminus \textup{\text{Har}}(\mathbf{s})$ is a finite union of connected components of $G$. Because each connected component of $G$ is Alexandrov-closed, the set $G \setminus \textup{\text{Har}}(\mathbf{s})$ is also Alexandrov-closed, as desired. 

Conversely, suppose that $\textup{\text{Har}}(\mathbf{s})$ is Alexandrov-open. If $\textup{\text{Har}}(\mathbf{s}) = \emptyset$, then it is trivially the union of the empty family of connected components of $G$. If $\textup{\text{Har}}(\mathbf{s}) \neq \emptyset$, then there exists a node $v \in \textup{\text{Har}}_0(\mathbf{s})$. Let $H$ denote the connected component of $G$ that contains $v$. For every $w \in V \cap H$, there exists a finite sequence of nodes $v = v_0, v_1, \ldots, v_k = w$ in $H$ and edges $e_1 = \{v_0, v_1\}, e_2 = \{v_1, v_2\}, \ldots, e_k = \{v_{k-1}, v_k\}$ in $E \cap H$ connecting them. Let $l \in \{0, 1, \ldots, k\}$ be the largest index such that $v_l \in \textup{\text{Har}}_0(\mathbf{s})$. We claim that $l = k$. Suppose, for the sake of contradiction, that $l < k$. Then $v_l \in \textup{\text{Har}}_0(\mathbf{s})$ and $v_{l+1} \notin \textup{\text{Har}}_0(\mathbf{s})$. Because $\textup{\text{Har}}(\mathbf{s})$ is Alexandrov-open and contains $v_l$, the edge $e_{l+1} = \{v_l, v_{l+1}\}$ belong to $\textup{\text{Har}}_1(\mathbf{s})$, and this shows that $v_{l+1} \in \textup{\text{Har}}_0(\mathbf{s})$. This contradiction shows that $l = k$, and hence $w \in \textup{\text{Har}}_0(\mathbf{s})$ for every $w \in V \cap H$. Because $\textup{\text{Har}}(\mathbf{s})$ is Alexandrov-open, $\textup{\text{Har}}(\mathbf{s})$ contains the edges $e_1, e_2, \ldots, e_k$. This shows that $H \subseteq \textup{\text{Har}}(\mathbf{s})$, and we conclude that $\textup{\text{Har}}(\mathbf{s})$ is a subgraph consisting of a union of connected components of $G$.
\end{proof}

In particular, when the graph $G$ is connected, the harmonic set $\textup{\text{Har}}(\mathbf{s})$ is Alexandrov-open only if it is either empty or coincides with the entire graph. In the latter case, $\mathbf{s} \in \Gamma(G;\mathcal{F})$ is a global section on $G$. We state this result formally as the following corollary.

\begin{coro.}\label{Corollary: Main result 2-3}
Let $\mathcal{F}$ be a cellular sheaf of $\mathbb{R}$-vector spaces on a connected graph $(G,\unlhd)$, and let $\mathbf{s} \in C^0(G;\mathcal{F})$ be a vector of node signals on $G$. Then, $\textup{\text{Har}}(\mathbf{s})$ is Alexandrov-open if and only if $\textup{\text{Har}}(\mathbf{s}) = \emptyset$ or $\textup{\text{Har}}(\mathbf{s}) = G$. 
\end{coro.}

\subsection{Multi-Scale Sheaf Harmonic Structures}
In practical applications, numerical issues often cause the collection of node signals $\mathbf{s} = (\mathbf{s}_v)_{v \in V} \in C^0(G; \mathcal{F})$ on a graph $G = V \cup E$ to lack any pair $(v, w) \in V \times V$ with $e := \{ v, w \} \in E$ satisfying $\mathcal{F}_{v,e}\mathbf{s}_v = \mathcal{F}_{w,e}\mathbf{s}_w$. This results in an infeasible situation for detecting local alignments based solely on the node features. To address this problem, we introduce a multiscale framework, presented in the following paragraphs, for detecting local alignments of node features.

Specifically, let $\mathcal{F}:(G,\unlhd)\to \text{Vect}_{\mathbb{R}}$ be a cellular sheaf defined on a graph $G = V \cup E$. For every vector $\mathbf{s}=(\mathbf{s}_v)_{v \in V} \in C^0(G;\mathcal{F})$ of node signals $\mathbf{s}_v \in \mathcal{F}_v$, we define the \textit{sheaf norm} of $\mathbf{s}$, denoted by $\Vert \mathbf{s} \Vert_{\mathcal{F}}$, as
\begin{equation}\label{Eq. sheaf norm}
\Vert \mathbf{s} \Vert_{\mathcal{F}} := \bigl\Vert \mathbf{C}^0 \mathbf{s} \bigr\Vert_2,
\end{equation}
where $\mathbf{C}^0 := \mathbf{C}_{\mathcal{F}}^0: C^0(G;\mathcal{F}) \to C^1(G;\mathcal{F})$ is the $0$th sheaf coboundary matrix. Then, $\Vert \cdot \Vert_{\mathcal{F}}$ defines a seminorm on $C^0(G;\mathcal{F})$, since $\mathbf{C}^0$ is $\mathbb{R}$-linear and $\Vert \cdot \Vert_2$ is a norm on $C^1(G;\mathcal{F})$. 

\begin{remark}\label{Remark: when a sheaf norm forms a norm}
The sheaf norm $\Vert \cdot \Vert_{\mathcal{F}}$ is a norm on $C^0(G;\mathcal{F})$ if and only if the coboundary matrix $\mathbf{C}^0$ is injective. Specifically, $\Vert \mathbf{C}^0 \mathbf{s} \Vert_2 = 0$ implies $\mathbf{C}^0 \mathbf{s} = \mathbf{0}$, which in turn shows that $\mathbf{s} = \mathbf{0}$ whenever $\mathbf{C}^0$ is injective. Conversely, if $\Vert \mathbf{s} \Vert_{\mathcal{F}}$ is a norm and $\mathbf{s} \in \ker(\mathbf{C}^0)$, then $\Vert \mathbf{s} \Vert_{\mathcal{F}} = \Vert \mathbf{C}^0 \mathbf{s} \Vert_2 = 0$, and this forces that $\mathbf{s} = \mathbf{0}$.
\end{remark}

\begin{remark}
From the perspective of energy, the sheaf norm in \eqref{Eq. sheaf norm} can be interpreted as the square root of a \textbf{quadratic potential function} of sheaf signals~\cite{zhao2025asynchronousnonlinearsheafdiffusion}. 
A related energy formulation based on the \emph{normalized sheaf Laplacian} is considered in sheaf diffusion models~\cite{bodnar2022neural}.
\end{remark}

The vector $\mathbf{C}^0 \mathbf{s}$, together with its norm $\Vert \mathbf{C}^0 \mathbf{s} \Vert_2$, provides a quantitative measure of the harmonicity of the node signals $\mathbf{s}_v$ on the graph. Notably, by Remark~\ref{Remark: when a sheaf norm forms a norm}, we have $\mathbf{s} \in \Gamma(G; \mathcal{F})$ whenever $\Vert \mathbf{s} \Vert_{\mathcal{F}} = 0$. This observation motivates a TDA-based multiscale framework that builds upon the substructures of $G$ constructed according to various upper bounds of the sheaf norm values.

\begin{def.}
Let $(\mathcal{F}, G, V, E, \mathbf{C^0}, \mathbf{s}, \mathbf{t})$ denote the data defined above, and let $\epsilon \geq 0$ be a nonnegative real number. An edge $e \in E$ is called an $\epsilon$\textbf{-harmonic edge} if $\Vert \mathbf{t}_e \Vert_2 \leq \epsilon$, and a node $v \in V$ is called an $\epsilon$\textbf{-harmonic node} if either $v$ has degree $0$, or there exists an edge $e \in E$ with $v \unlhd e$ such that $\Vert \mathbf{t}_e \Vert_2 \leq \epsilon$. The collections of all $\epsilon$-harmonic edges and harmonic nodes are denoted by $\textup{\text{Har}}_1^\epsilon(\mathbf{s})$ and $\textup{\text{Har}}_0^\epsilon(\mathbf{s})$, respectively. The \textbf{harmonic set} associated with $\mathbf{s}$ is defined the union $\textup{\text{Har}}^\epsilon(\mathbf{s}) := \textup{\text{Har}}_0^\epsilon(\mathbf{s}) \cup \textup{\text{Har}}_1^\epsilon(\mathbf{s})$.     
\end{def.}

Similar to the proof that any harmonic set forms a subgraph, if $e \in \textup{\text{Har}}_1^\epsilon(\mathbf{s})$ is an edge with endpoints $v$ and $w$ in $V$, then $v, w \in \textup{\text{Har}}_0^\epsilon(\mathbf{s})$, implying that every $\epsilon$-harmonic set constitutes a subgraph of $G$. In particular, inspired by TDA frameworks, we introduce the notion of a \textit{filtration of harmonic sets} (see Corollary \ref{Corollary: Main result 3-2}), which is established in the following theorem.

\begin{theorem}\label{Theorem: Main result 3-1}
Let $\mathcal{F}$ be a cellular sheaf of $\mathbb{R}$-vector spaces on a connected graph $(G,\unlhd)$, and let $\mathbf{s} \in C^0(G;\mathcal{F})$ be a vector of node signals on $G$. If $\epsilon_1 \leq \epsilon_2$ are nonnegative real numbers, then
\begin{equation*}
\textup{\text{Har}}^{\epsilon_1}(\mathbf{s}) \subseteq \textup{\text{Har}}^{\epsilon_2}(\mathbf{s}).    
\end{equation*}
\end{theorem}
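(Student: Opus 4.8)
The plan is to reduce the claimed inclusion to the two component inclusions $\textup{\text{Har}}_1^{\epsilon_1}(\mathbf{s}) \subseteq \textup{\text{Har}}_1^{\epsilon_2}(\mathbf{s})$ and $\textup{\text{Har}}_0^{\epsilon_1}(\mathbf{s}) \subseteq \textup{\text{Har}}_0^{\epsilon_2}(\mathbf{s})$, prove each separately, and then combine them by monotonicity of union. The entire argument rests on a single elementary observation: both defining conditions are sublevel-set conditions on the scalar quantities $\Vert \mathbf{t}_e \Vert_2$, so enlarging the threshold from $\epsilon_1$ to $\epsilon_2 \geq \epsilon_1$ can only enlarge the set of qualifying cells. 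No spectral or topological machinery is needed; only transitivity of $\leq$ on $\mathbb{R}$ is used.

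First I would handle the edge sets. If $e \in \textup{\text{Har}}_1^{\epsilon_1}(\mathbf{s})$, then by definition $\Vert \mathbf{t}_e \Vert_2 \leq \epsilon_1$; since $\epsilon_1 \leq \epsilon_2$, transitivity gives $\Vert \mathbf{t}_e \Vert_2 \leq \epsilon_2$, so $e \in \textup{\text{Har}}_1^{\epsilon_2}(\mathbf{s})$. Next I would treat the node sets, where the only point requiring care is the disjunctive definition of an $\epsilon$-harmonic node. Let $v \in \textup{\text{Har}}_0^{\epsilon_1}(\mathbf{s})$. Either $v$ has degree $0$, in which case $v$ is $\epsilon$-harmonic for every $\epsilon \geq 0$ and in particular lies in $\textup{\text{Har}}_0^{\epsilon_2}(\mathbf{s})$; or there exists an incident edge $e$ with $v \unlhd e$ and $\Vert \mathbf{t}_e \Vert_2 \leq \epsilon_1 \leq \epsilon_2$, which again certifies $v \in \textup{\text{Har}}_0^{\epsilon_2}(\mathbf{s})$. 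Both branches of the definition are therefore preserved under increasing the threshold.

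Finally, combining the two component inclusions and using that $\cup$ respects set inclusion yields
\begin{equation*}
\textup{\text{Har}}^{\epsilon_1}(\mathbf{s}) = \textup{\text{Har}}_0^{\epsilon_1}(\mathbf{s}) \cup \textup{\text{Har}}_1^{\epsilon_1}(\mathbf{s}) \subseteq \textup{\text{Har}}_0^{\epsilon_2}(\mathbf{s}) \cup \textup{\text{Har}}_1^{\epsilon_2}(\mathbf{s}) = \textup{\text{Har}}^{\epsilon_2}(\mathbf{s}),
\end{equation*}
which is the desired conclusion. I do not anticipate any genuine obstacle here, as the statement is pure monotonicity of thresholded sublevel sets; the only subtlety worth flagging is the degree-zero branch of the harmonic-node definition, which must be checked to behave consistently across thresholds, as done above. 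It is also worth noting that the connectedness hypothesis on $G$ plays no role in this inclusion and is presumably retained only for uniformity with the surrounding results and for the filtration construction in Corollary~\ref{Corollary: Main result 3-2}.
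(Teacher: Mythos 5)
Your proof is correct and follows essentially the same argument as the paper: reduce to the componentwise inclusions $\textup{\text{Har}}_1^{\epsilon_1}(\mathbf{s}) \subseteq \textup{\text{Har}}_1^{\epsilon_2}(\mathbf{s})$ and $\textup{\text{Har}}_0^{\epsilon_1}(\mathbf{s}) \subseteq \textup{\text{Har}}_0^{\epsilon_2}(\mathbf{s})$ via transitivity of $\leq$ for edges and the degree-zero/incident-edge case split for nodes, then take unions. Your closing observation that connectedness of $G$ is never used is also accurate; the paper's proof likewise makes no use of that hypothesis.
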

\begin{proof}
It is equivalent to show that $\textup{\text{Har}}_0^{\epsilon_1}(\mathbf{s}) \subseteq \textup{\text{Har}}_0^{\epsilon_2}(\mathbf{s})$ and $\textup{\text{Har}}^{\epsilon_1}_1(\mathbf{s}) \subseteq \textup{\text{Har}}_1^{\epsilon_2}(\mathbf{s})$. Specifically, if $e \in \textup{\text{Har}}^{\epsilon_1}_1(\mathbf{s})$, then $\Vert \mathbf{t}_e \Vert_2 \leq \epsilon_1 \leq \epsilon_2$, and this shows that $e \in \textup{\text{Har}}^{\epsilon_2}_1(\mathbf{s})$. On the other hand, suppose $v \in \textup{\text{Har}}^{\epsilon_1}_0(\mathbf{s})$. If $v$ has degree $0$, then $v$ is also an element in $\textup{\text{Har}}^{\epsilon_2}_0(\mathbf{s})$. Otherwise, $v$ is an endpoint for some edge $e \in \textup{\text{Har}}^{\epsilon_1}_1(\mathbf{s})$. Because $\textup{\text{Har}}^{\epsilon_1}_1(\mathbf{s}) \subseteq \textup{\text{Har}}^{\epsilon_2}_1(\mathbf{s})$, we deduce that $v \in \textup{\text{Har}}^{\epsilon_2}_0(\mathbf{s})$.
\end{proof}

\begin{coro.}\label{Corollary: Main result 3-2}
Let $\mathcal{F}$, $(G, \unlhd)$, and $\mathbf{s} \in C^0(G; \mathcal{F})$ be defined as in Theorem~\ref{Theorem: Main result 3-1}. Then, any increasing sequence of nonnegative real numbers $0 \leq \epsilon_1 < \epsilon_2 < \cdots < \epsilon_r$ induces the following filtration of subgraphs of $G$:
\begin{equation}\label{Eq. Filtration-1}
\textup{\text{Har}}^{\epsilon_1}(\mathbf{s}) \subseteq \textup{\text{Har}}^{\epsilon_2}(\mathbf{s}) \subseteq \cdots\subseteq \textup{\text{Har}}^{\epsilon_r}(\mathbf{s}).
\end{equation}
In particular, the persistence barcode of this filtration can be computed.
\end{coro.}

Furthermore, from the proof of Theorem~\ref{Theorem: Main result 3-1}, we observe that any increasing sequence of nonnegative real numbers $0 \leq \epsilon_1 < \epsilon_2 < \cdots < \epsilon_r$ induces the following filtrations. First, the filtration
\begin{equation}\label{Eq. Filtration-2}
\textup{\text{Har}}^{\epsilon_1}_0(\mathbf{s}) \subseteq \textup{\text{Har}}^{\epsilon_2}_0(\mathbf{s}) \subseteq \cdots\subseteq \textup{\text{Har}}^{\epsilon_r}_0(\mathbf{s})
\end{equation}
consists of subgraphs (i.e., Alexandrov-closed subsets) of $G$. On the other hand, the filtration
\begin{equation}\label{Eq. Filtration-3}
\textup{\text{Har}}^{\epsilon_1}_1(\mathbf{s}) \subseteq \textup{\text{Har}}^{\epsilon_2}_1(\mathbf{s}) \subseteq \cdots\subseteq \textup{\text{Har}}^{\epsilon_r}_1(\mathbf{s})
\end{equation}
is composed of Alexandrov-open subsets of $G$. Although the $1$st harmonic subsets are not Alexandrov-closed in general, which causes an inconvenience in the computation of persistence barcode, this issue can be resolved by considering their Alexandrov closures. Specifically, the sequence
\begin{equation}\label{Eq. Filtration-4}
\text{cl}(\textup{\text{Har}}^{\epsilon_1}_1(\mathbf{s})) \subseteq \text{cl}(\textup{\text{Har}}^{\epsilon_2}_1(\mathbf{s})) \subseteq \cdots\subseteq \text{cl}(\textup{\text{Har}}^{\epsilon_r}_1(\mathbf{s})),
\end{equation}
forms a filtration of subgraphs of $G$, providing a foundation for the computation of persistence barcode. 

\subsection{Future Work}
The proposed sheaf-theoretic and TDA-based framework opens several promising directions for future research in geometric and topological deep learning. From a theoretical perspective, extending the current formalism to higher-dimensional cellular structures, such as simplicial representations, could further generalize the harmonic analysis of features beyond pairwise relations. Furthermore, a more thorough theoretical analysis of the relationship between the proposed sheaf-based harmonic analysis and the learning behavior of the GDL or TDL models remains to be explored. On the computational side, incorporating these topological representations into the training and evaluation phases of graph neural networks may enable more interpretable and topology-aware learning paradigms. Future work will also investigate the use of sheaf-based filtrations as diagnostic tools for understanding oversmoothing, heterophily, and hierarchical feature propagation in complex network architectures.

\section{Conclusion}
This work establishes a theoretical framework for modeling complex networks and graph neural models through cellular sheaf theory and topological data analysis (TDA). By interpreting attention mechanisms as cellular sheaves, it provides an algebraic–topological basis for studying local feature alignment and global consistency. The introduced notions of harmonic node and edge sets, together with a TDA-based multiscale filtration framework, enable quantitative and hierarchical analyses of signal coherence and substructures, grounded in the local harmonicity of signals and the geometry of the underlying domain. Overall, the proposed approach unifies sheaf theory, graph signal analysis, and persistent topology, offering a new mathematical lens for understanding complex networks and attention-based architectures in geometric and topological deep learning.

\section{Acknowledgments}
This work was supported by the National Science and Technology Council (NSTC) of Taiwan under Grant No. NSTC 114-2811-M-008-069 during the author’s postdoctoral research at the Department of Mathematics, National Central University (NCU). The author gratefully acknowledges Dr. John M. Hong of NCU and the reviewers for their invaluable and constructive suggestions, as well as Dr. Hong’s generous support of this work. The author employed OpenAI’s ChatGPT-5 to improve the grammatical accuracy and linguistic fluency of the manuscript and remains solely responsible for the technical and mathematical content and its interpretation.

\bibliography{aaai2026}

@article{hansen2019toward, title={Toward a spectral theory of cellular sheaves}, author={Jakob Hansen and Robert Ghrist }, journal={Journal of Applied and Computational Topology}, volume={3}, number={4}, pages={315--358}, year={2019}, url={https://doi.org/10.1007/s41468-019-00038-7}, }

@article{chen2019graph, title={Graph networks as a universal machine learning framework for molecules and crystals}, author={Chen, Chi and Ye, Weike and Zuo, Yunxing and Zheng, Chen and Ong, Shyue Ping}, journal={Chemistry of Materials}, volume={31}, number={9}, pages={3564--3572}, year={2019}, url={https://pubs.acs.org/doi/10.1021/acs.chemmater.9b01294}, }

@article{Kearnes2016, title={Molecular graph convolutions: moving beyond fingerprints}, author={Kearnes, Steven and McCloskey, Kevin and Berndl, Marc and Pande, Vijay and Riley, Patrick}, journal={Journal of Computer-Aided Molecular Design}, volume={30}, number={8}, pages={595---608}, year={2016}, url={https://doi.org/10.1007/s10822-016-9938-8}, }

@inproceedings{NIPS2015_f9be311e, title={Convolutional Networks on Graphs for Learning Molecular Fingerprints}, author={Duvenaud, David K and Maclaurin, Dougal and Iparraguirre, Jorge and Bombarell, Rafael and Hirzel, Timothy and Aspuru-Guzik, Alan and Adams, Ryan P}, volume={28}, year={2015}, url={https://proceedings.neurips.cc/paper_files/paper/2015/file/f9be311e65d81a9ad8150a60844bb94c-Paper.pdf}, booktitle={Advances in Neural Information Processing Systems (NIPS 2015)}, }

@inproceedings{velickovic2018graph, title={Graph Attention Networks}, author={Veli{\v{c}}kovi{\'{c}}, Petar and Cucurull, Guillem and Casanova, Arantxa and Romero, Adriana and Li{\`{o}}, Pietro and Bengio, Yoshua}, year={2018}, url={https://openreview.net/forum?id=rJXMpikCZ}, booktitle={Proceedings of the International Conference on Learning Representations (ICLR)}, }

@inproceedings{Jiang_2019_CVPR, title={Semi-Supervised Learning With Graph Learning-Convolutional Networks}, author={Jiang, Bo and Zhang, Ziyan and Lin, Doudou and Tang, Jin and Luo, Bin}, year={2019}, booktitle={Proceedings of the IEEE/CVF Conference on Computer Vision and Pattern Recognition (CVPR)}, }

@inproceedings{KipfGCN2017, title={Semi-Supervised Classification with Graph Convolutional Networks}, author={Thomas N. Kipf and Max Welling}, year={2017}, url={https://openreview.net/forum?id=SJU4ayYgl}, booktitle={Proceedings of the International Conference on Learning Representations (ICLR)}, }

@inproceedings{Gong_2019_CVPR, title={Exploiting Edge Features for Graph Neural Networks}, author={Gong, Liyu and Cheng, Qiang}, pages={9203--9211}, year={2019}, booktitle={Proceedings of the IEEE/CVF Conference on Computer Vision and Pattern Recognition (CVPR)}, }

@article{Xie_GNN_citation_2021, title={Graph neural collaborative topic model for citation recommendation}, author={Xie, Qianqian and Zhu, Yutao and Huang, Jimin and Du, Pan and Nie, Jian-Yun}, journal={ACM Transactions on Information Systems}, volume={40}, number={3}, pages={1--30}, year={2021}, url={https://doi.org/10.1145/3473973}, }

@article{LI2023126441, title={A survey of graph neural network based recommendation in social networks}, author={Xiao Li and Li Sun and Mengjie Ling and Yan Peng}, journal={Neurocomputing}, volume={549}, pages={126441}, year={2023}, url={https://www.sciencedirect.com/science/article/pii/S0925231223005647}, }

@book{BredonSheaf1997, title={Sheaf Theory}, author={Glen E. Bredon}, year={1997}, url={https://link.springer.com/book/10.1007/978-1-4612-0647-7}, publisher={Springer New York, NY}, address={United States}, edition={2nd}, isbn={978-0-387-94905-5}}

@book{Tennison_1975, place={Cambridge}, series={London Mathematical Society Lecture Note Series}, title={Sheaf Theory}, publisher={Cambridge University Press}, author={Tennison, B. R.}, year={1975}, collection={London Mathematical Society Lecture Note Series}, isbn={978-0-521-20784-3}}

@misc{ebli2020simplicialneuralnetworks,
      title={Simplicial Neural Networks}, 
      author={Stefania Ebli and Micha\"{e}l Defferrard and Gard Spreemann},
      year={2020},
      eprint={2010.03633},
      archivePrefix={arXiv},
      primaryClass={cs.LG},
      url={https://arxiv.org/abs/2010.03633}, 
}

@inproceedings{pmlr-v139-bodnar21a, title={Weisfeiler and Lehman Go Topological: Message Passing Simplicial Networks}, author={Bodnar, Cristian and Frasca, Fabrizio and Wang, Yuguang and Otter, Nina and Montufar, Guido F and Li{\'o}, Pietro and Bronstein, Michael}, volume={139}, pages={1026--1037}, year={2021}, url={https://proceedings.mlr.press/v139/bodnar21a.html}, booktitle={Proceedings of the 38th International Conference on Machine Learning (ICML)}, }

@inproceedings{hajij2021cellcomplexneuralnetworks,
      title={Cell Complex Neural Networks}, 
      author={Mustafa Hajij and Kyle Istvan and Ghada Zamzmi},
      year={2020},
      booktitle={Topological Data Analysis and Beyond Workshop at the 34th Conference on Neural Information Processing Systems (NeurIPS 2020)},
      url={https://openreview.net/forum?id=6Tq18ySFpGU}, 
}

@inproceedings{hansen_shNN_2020,
      title={Sheaf Neural Networks}, 
      author={Hansen, Jakob and Gebhart, Thomas},
      year={2020},
      booktitle={Topological Data Analysis and Beyond Workshop at the 34th Conference on Neural Information Processing Systems (NeurIPS 2020)},
      url={https://openreview.net/forum?id=GgcgIJsT8HD}, 
}

@inproceedings{barbero2022sheaf,
      title={Sheaf Attention Networks}, 
      author={Barbero, Federico and Bodnar, Cristian and de Oc{\'a}riz Borde, Haitz S{\'a}ez and Lio, Pietro},
      year={2022},
      booktitle={Workshop on Symmetry and Geometry in Neural Representations at the 36th Conference on Neural Information Processing Systems (NeurIPS 2022)},
      url={https://openreview.net/forum?id=LIDvgVjpkZr},
}

@inproceedings{bodnar2022neural, title={Neural Sheaf Diffusion: A Topological Perspective on Heterophily and Oversmoothing in {GNN}s}, author={Bodnar, Cristian and Di Giovanni, Francesco and Chamberlain, Benjamin and Lio, Pietro and Bronstein, Michael}, volume={35}, pages={18527--18541}, year={2022}, booktitle={Advances in Neural Information Processing Systems (NeurIPS 2022)}, }

@inproceedings{barbero2022sheafLap,
  title = 	 {Sheaf Neural Networks with Connection {L}aplacians},
  author =       {Barbero, Federico and Bodnar, Cristian and S\'aez de Oc\'ariz Borde, Haitz and Bronstein, Michael and Veli\v{c}kovi\'c, Petar and Li\`o, Pietro},
  booktitle = 	 {Topological, Algebraic, and Geometric Learning Workshops 2022},
  pages = 	 {28--36},
  year = 	 {2022},
  volume = 	 {196},
  series = 	 {Proceedings of Machine Learning Research (PMLR)},
  month = 	 {25 Feb--22 Jul},
  pdf = 	 {https://proceedings.mlr.press/v196/barbero22a/barbero22a.pdf},
  url = 	 {https://proceedings.mlr.press/v196/barbero22a.html},
}

@article{cooperband2023towards, title={Towards homological methods in graphic statics}, author={Cooperband, Zoe and Ghrist, Robert}, journal={Journal of the International Association for Shell and Spatial Structures}, volume={64}, number={4}, pages={266--277}, year={2023}, publisher={International Association for Shell and Spatial Structures (IASS)}, }

@misc{cooperband2023cosheaftheoryreciprocalfigures,
      title={A Cosheaf Theory of Reciprocal Figures: Planar and Higher Genus Graphic Statics}, 
      author={Zoe Cooperband and Robert Ghrist and Jakob Hansen},
      year={2023},
      eprint={2311.12946},
      archivePrefix={arXiv},
      primaryClass={math.AT},
      url={https://arxiv.org/abs/2311.12946}, 
}

@misc{cooperband2024equivariantcosheavesfinitegroup,
      title={Equivariant Cosheaves and Finite Group Representations in Graphic Statics}, 
      author={Zoe Cooperband and Miguel Lopez and Bernd Schulze},
      year={2024},
      eprint={2401.09392},
      archivePrefix={arXiv},
      primaryClass={math.AT},
      url={https://arxiv.org/abs/2401.09392}, 
}

@misc{cooperband2025unifiedorigamikinematicscosheaf,
      title={Unified Origami Kinematics via Cosheaf Homology}, 
      author={Zoe Cooperband and Robert Ghrist},
      year={2025},
      eprint={2501.02581},
      archivePrefix={arXiv},
      primaryClass={math.AT},
      url={https://arxiv.org/abs/2501.02581}, 
}

@inproceedings{short2022current, title={On the Current State of Sheaf Theoretic Networking}, author={Short, Robert and Cleveland, Jacob and Cooperband, Zoe and Moy, Michael}, year={2022}, booktitle={2022 IEEE International Conference on Wireless for Space and Extreme Environments (WiSEE)}, }

@book{Michael_Robinson_book_2014, title={Topological Signal Processing}, author={Michael Robinson}, year={2014}, url={https://doi.org/10.1007/978-3-642-36104-3}, publisher={Springer Berlin, Heidelberg}, address={Germany}, isbn={978-3-642-36103-6}, }

@article{arya2025sheaf, title={A sheaf-theoretic construction of shape space}, author={Arya, Shreya and Curry, Justin and Mukherjee, Sayan}, journal={Foundations of Computational Mathematics}, volume={25}, number={3}, pages={813--863}, year={2025}, url={https://doi.org/10.1007/s10208-024-09650-1}, }

@article{Curry2015, title={Topological data analysis and cosheaves}, author={Curry, Justin Michael}, journal={Japan Journal of Industrial and Applied Mathematics}, volume={32}, number={2}, pages={333--371}, year={2015}, url={https://doi.org/10.1007/s13160-015-0173-9}, }

@phdthesis{curry2014sheaves, title={Sheaves, cosheaves and applications}, author={Curry, Justin Michael}, year={2014}, address={Philadelphia, United States}, type={Ph{D} thesis}, school={University of Pennsylvania}, }

@article{carlsson2009topology, title={Topology and data}, author={Carlsson, Gunnar}, journal={Bulletin of the American Mathematical Society}, volume={46}, number={2}, pages={255--308}, year={2009}, }

@inproceedings{carlsson2004persistence, title={Persistence Barcodes for Shapes}, author={Gunnar Carlsson and Afra Zomorodian and Anne Collins and Leonidas Guibas}, pages={124--135}, year={2004}, url={https://dl.acm.org/doi/10.1145/1057432.1057449}, booktitle={Proceedings of the 2004 Eurographics/ACM SIGGRAPH Symposium on Geometry Processing}, }

@inproceedings{zomorodian2004computing, title={Computing Persistent Homology}, author={Afra Zomorodian and Gunnar Carlsson}, pages={347--356}, year={2004}, booktitle={Proceedings of the Twentieth Annual Symposium on Computational Geometry (SoCG)}, }

@inproceedings{cohen2005stability, title={Stability of Persistence Diagrams}, author={David Cohen-Steiner and Herbert Edelsbrunner and John Harer}, pages={263--271}, year={2005}, url={https://link.springer.com/article/10.1007/s00454-006-1276-5}, booktitle={Proceedings of the Twenty-First Annual Symposium on Computational Geometry (SoCG)}, }

@article{ghrist2008barcodes, title={Barcodes: {T}he persistent topology of data}, author={Robert Ghrist}, journal={Bulletin of the American Mathematical Society}, volume={45}, number={1}, pages={61--75}, year={2008}, url={https://www.ams.org/journals/bull/2008-45-01/S0273-0979-07-01191-3/}, }

@article{alexandroff1937diskrete,
  title={Diskrete r\"{a}ume},
  author={Alexandroff, Pavel},
  journal={Matematicheskii Sbornik},
  volume={2},
  number={3},
  pages={501--519},
  year={1937},
}

@article{GDL_Bronstein,
  author={Bronstein, Michael M. and Bruna, Joan and LeCun, Yann and Szlam, Arthur and Vandergheynst, Pierre},
  journal={IEEE Signal Processing Magazine}, 
  title={Geometric Deep Learning: Going beyond Euclidean data}, 
  year={2017},
  volume={34},
  number={4},
  pages={18-42},
  keywords={Convolution;Computational modeling;Euclidean distance;Machine learning;Convolutional codes;Social network services;Computer architecture},
  doi={10.1109/MSP.2017.2693418}}

@misc{hajij2023topologicaldeeplearninggoing,
      title={Topological Deep Learning: Going Beyond Graph Data}, 
      author={Mustafa Hajij and Ghada Zamzmi and Theodore Papamarkou and Nina Miolane and Aldo Guzmán-Sáenz and Karthikeyan Natesan Ramamurthy and Tolga Birdal and Tamal K. Dey and Soham Mukherjee and Shreyas N. Samaga and Neal Livesay and Robin Walters and Paul Rosen and Michael T. Schaub},
      year={2023},
      eprint={2206.00606},
      archivePrefix={arXiv},
      primaryClass={cs.LG},
      url={https://arxiv.org/abs/2206.00606}, 
}

@article{zia2024topological,
  title={Topological deep learning: a review of an emerging paradigm},
  author={Zia, Ali and Khamis, Abdelwahed and Nichols, James and Tayab, Usman Bashir and Hayder, Zeeshan and Rolland, Vivien and Stone, Eric and Petersson, Lars},
  journal={Artificial Intelligence Review},
  volume={57},
  number={4},
  pages={77},
  year={2024},
  publisher={Springer}
}

@misc{zhao2025asynchronousnonlinearsheafdiffusion,
      title={Asynchronous Nonlinear Sheaf Diffusion for Multi-Agent Coordination}, 
      author={Yichen Zhao and Tyler Hanks and Hans Riess and Samuel Cohen and Matthew Hale and James Fairbanks},
      year={2025},
      eprint={2510.00270},
      archivePrefix={arXiv},
      primaryClass={math.OC},
      url={https://arxiv.org/abs/2510.00270}, 
}

\end{document}